\documentclass[10pt,singlecolumn]{article}

\usepackage[margin=0.8in]{geometry}
\usepackage{amsmath,amssymb,amsfonts,amsthm}
\usepackage{booktabs}
\usepackage{graphicx}
\usepackage{hyperref}
\usepackage{xcolor}
\usepackage{algorithm}
\usepackage{algorithmic}
\usepackage{caption}
\usepackage{subcaption}
\usepackage[numbers]{natbib}
\usepackage{microtype}
\usepackage{enumitem}
\usepackage{multirow}
\hypersetup{colorlinks=true,linkcolor=blue,citecolor=blue,urlcolor=blue}

\newtheorem{proposition}{Proposition}

\title{\textbf{IsoQuant: Hardware-Aligned SO(4) Isoclinic Rotations \\ for LLM KV Cache Compression}}

\author{
  Zhongping Ji \\
  \texttt{}
}

\date{}

\begin{document}
\maketitle

\begin{abstract}

Orthogonal feature decorrelation is a useful primitive for low-bit online vector quantization, but the standard approach of applying a dense random orthogonal matrix incurs prohibitive $O(d^2)$ storage and compute. Recent work such as RotorQuant reduces this cost by replacing the global transform with blockwise $3$D Clifford rotors. While effective, the resulting $3$D partition is not naturally aligned with modern hardware, since transformer feature widths are typically powers of two and thus lead to awkward tails, irregular memory access, and limited local mixing.

We propose \textbf{IsoQuant}, a blockwise rotation framework based on quaternion algebra and the isoclinic decomposition of $SO(4)$. Each block of four coordinates is identified with a quaternion and transformed by the closed-form map $T(v)=q_L v \overline{q_R}$ parameterized by a pair of unit quaternions. This yields two main variants: \emph{IsoQuant-Full}, which realizes the full six degrees of freedom of $SO(4)$, and \emph{IsoQuant-Fast}, which retains a single isoclinic factor for lower overhead. The same framework also admits a lightweight $2$D planar special case, which we treat as an auxiliary operating point rather than the primary method. At $d=128$, IsoQuant-Full reduces forward rotation cost from approximately $2{,}408$ FMAs in RotorQuant to $1{,}024$, while IsoQuant-Fast reduces it further to $512$. Across $18$ fused CUDA benchmark settings spanning $d \in \{128,256,512\}$, bit widths $\{2,3,4\}$, and both FP16 and FP32 execution, IsoQuant-Full, IsoQuant-Fast, and the $2$D special case achieve mean kernel-level speedups of $4.49\times$, $4.66\times$, and $4.66\times$ over RotorQuant while maintaining comparable reconstruction MSE, with peak speedups above $6\times$. Our current validation is limited to the stage-1 quantize--dequantize path on synthetic normalized vectors; full end-to-end KV-cache evaluation on real model activations remains future work. Code: \url{https://github.com/ParaMind2025/isoquant}.

\end{abstract}

\section{Introduction}
\label{sec:intro}

KV-cache compression has emerged as a central systems bottleneck for long-context large language model inference. A core insight behind online vector quantization methods such as TurboQuant~\cite{turboquant} is that decorrelating features before scalar quantization substantially improves rate--distortion behavior. In particular, a random orthogonal transform spreads information across coordinates, making per-coordinate Lloyd--Max quantization far more effective than quantization in the original basis.

The main drawback is cost. For a head dimension $d$, a dense orthogonal transform requires $O(d^2)$ parameters and arithmetic, which is difficult to justify in latency-sensitive settings such as autoregressive decoding. RotorQuant~\cite{rotorquant} addresses this issue by replacing the dense transform with sparse blockwise rotations in the geometric algebra $\mathrm{Cl}(3,0)$. This preserves the qualitative benefit of local decorrelation while reducing complexity to linear in $d$.

Despite this progress, the $3$D block structure leaves performance on the table. First, it is not hardware aligned: common head dimensions such as $64$, $128$, and $256$ are powers of two, so partitioning into triples creates awkward remainder handling and non-ideal memory layout. For example, $d=128$ yields forty-two full $3$D blocks plus one $2$D tail. Second, a $3$D rotation has only three intrinsic degrees of freedom, which limits how aggressively each local subspace can mix correlated coordinates.

This paper studies a different operating point. We move from $3$D Clifford blocks to $4$D quaternion blocks and leverage the Lie-theoretic decomposition
\begin{equation}
so(4) \cong su(2) \oplus su(2),
\end{equation}
which implies that every $4$D rotation can be represented by a pair of unit quaternions acting from the left and right. This yields a closed-form, low-overhead parameterization of $SO(4)$ that is both mathematically clean and implementation-friendly. We therefore name the method \textbf{IsoQuant}, emphasizing the isoclinic structure underlying its blockwise $SO(4)$ transform.

\paragraph{Contributions.}
Our contributions are as follows.
\begin{enumerate}[nosep,leftmargin=14pt]
  \item We introduce \textbf{IsoQuant}, a $4$D block rotation scheme for online vector quantization based on quaternion sandwich products.
  \item We derive two practical $4$D instantiations, \textbf{IsoQuant-Full} and \textbf{IsoQuant-Fast}, and show that the same framework also admits a lightweight $2$D planar special case as an additional operating point.
  \item We analyze why $4$D blocks are attractive for systems deployment: they improve arithmetic efficiency, eliminate most boundary handling, and align naturally with SIMD and fused-kernel execution.
  \item We provide a fused CUDA implementation and show, in fair kernel-level comparisons against RotorQuant fused CUDA kernels, that IsoQuant achieves consistent speedups while preserving reconstruction quality.
\end{enumerate}

\section{Related Work}
\label{sec:related}

\paragraph{Online vector quantization and KV-cache compression.}
TurboQuant~\cite{turboquant} frames online vector quantization as a dense-rotation-plus-scalar-quantization pipeline and combines it with QJL residual correction~\cite{qjl}. RotorQuant~\cite{rotorquant} reduces the heavy dense transform by replacing it with sparse $3$D Clifford rotors. Other KV-cache compression methods such as KIVI~\cite{kivi} and KVQuant~\cite{kvquant} focus on calibration, asymmetric quantization, or hardware-aware packing rather than explicit geometric decorrelation.

\paragraph{Quaternion and geometric representations.}
Unit quaternions are a classical tool for representing rotations~\cite{kuipers}. In machine learning, algebraically structured representations have appeared in equivariant models, geometric neural networks, and efficient transformations on structured features. RiemannFormer~\cite{riemannformer} provides a recent geometric treatment that explicitly discusses the isoclinic decomposition of $SO(4)$ in the setting of curved-space attention. Our use of this structure is narrower and more systems-oriented: we exploit the same $SO(4)$ factorization to design a cheap blockwise decorrelating transform for low-bit quantization.

\section{Problem Setup}
\label{sec:setup}

We consider the stage-1 mean-squared-error component of online vector quantization. Given an input vector $x \in \mathbb{R}^d$, we seek an encoder $E$, scalar quantizer $Q$, and decoder $D$ such that
\begin{equation}
\hat{x} = D(Q(E(x)))
\end{equation}
minimizes reconstruction error while keeping both parameter count and online compute small.

As in prior work~\cite{turboquant,rotorquant}, we factor the problem into three pieces:
\begin{enumerate}[nosep,leftmargin=14pt]
  \item an orthogonal or approximately orthogonal transform that decorrelates coordinates;
  \item coordinate-wise scalar quantization in the transformed basis;
  \item an inverse transform to reconstruct the vector.
\end{enumerate}

To stabilize the scalar quantizer, we additionally separate norm and direction. Writing
\begin{equation}
x = \rho \, \bar{x}, \qquad \rho = \|x\|_2, \qquad \|\bar{x}\|_2 = 1,
\end{equation}
we quantize the normalized direction $\bar{x}$ and store or transmit the norm $\rho$ separately. This follows the implementation pattern used by efficient quantizers and keeps the transformed coordinates within a predictable dynamic range.

\section{Quaternion and $SO(4)$ Preliminaries}
\label{sec:prelim}

We begin by identifying each $4$D block with a quaternion
\begin{equation}
v = x_0 + x_1 \mathbf{i} + x_2 \mathbf{j} + x_3 \mathbf{k} \in \mathbb{H},
\end{equation}
where $\mathbb{H}$ denotes the quaternion algebra and $\mathbf{i}^2 = \mathbf{j}^2 = \mathbf{k}^2 = \mathbf{ijk} = -1$~\cite{kuipers}. For
\begin{equation}
q = a + b\mathbf{i} + c\mathbf{j} + d\mathbf{k},
\end{equation}
its quaternion conjugate is
\begin{equation}
\overline{q} = a - b\mathbf{i} - c\mathbf{j} - d\mathbf{k}.
\end{equation}
Unit quaternions form the $3$-sphere $S^3$ and provide a nonsingular parameterization of rotational factors that will be central to our construction.

The relevant geometric structure is the special orthogonal group in four dimensions. At the Lie-algebra level one has the classical decomposition
\begin{equation}
\mathfrak{so}(4) \cong \mathfrak{su}(2)_L \oplus \mathfrak{su}(2)_R,
\end{equation}
which underlies the isoclinic decomposition of $SO(4)$ discussed, for example, in RiemannFormer~\cite{riemannformer}. This splitting implies that every infinitesimal $4$D rotation can be written as the sum of two commuting components. More precisely, if
\begin{equation}
X = X_A + X_B, \qquad X_A \in \mathfrak{su}(2)_L,\quad X_B \in \mathfrak{su}(2)_R,
\end{equation}
then $[X_A,X_B]=0$, and therefore the corresponding group element factors as
\begin{equation}
R = \exp(X) = \exp(X_A + X_B) = \exp(X_A)\exp(X_B),
\end{equation}
where the exponential map is given by
\begin{equation}
\exp: \mathfrak{so}(4) \to SO(4), \qquad X \mapsto \exp(X) = \sum_{k=0}^{\infty} \frac{X^k}{k!}.
\end{equation}
This commuting factorization is the Lie-theoretic counterpart of the left/right isoclinic decomposition.

\begin{proposition}
Let $q_L, q_R \in S^3$ be unit quaternions. Then the map
\begin{equation}
T(v) = q_L \, v \, \overline{q_R}
\end{equation}
defines an orthogonal transformation of $\mathbb{R}^4$. Its inverse is
\begin{equation}
T^{-1}(v) = \overline{q_L} \, v \, q_R,
\end{equation}
and the pairs $(q_L,q_R)$ and $(-q_L,-q_R)$ induce the same element of $SO(4)$.
\end{proposition}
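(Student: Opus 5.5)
The proof rests on three standard facts about the quaternion algebra $\mathbb{H}$, which I would recall first: the norm $|q|^2 = q\overline{q}$ coincides with the Euclidean norm of $(a,b,c,d)\in\mathbb{R}^4$; the norm is multiplicative, $|pq| = |p|\,|q|$; and conjugation is an anti-automorphism, $\overline{pq} = \overline{q}\,\overline{p}$. Multiplicativity follows from the anti-automorphism property: $|pq|^2 = pq\,\overline{q}\,\overline{p} = p|q|^2\overline{p} = |p|^2|q|^2$, using that the real scalar $|q|^2$ commutes with everything. The anti-automorphism property can be checked on basis elements and extended bilinearly. I would cite these from \cite{kuipers} rather than reprove them.

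\textbf{Orthogonality.} The map $T$ is $\mathbb{R}$-linear in $v$ because quaternion multiplication is bilinear over $\mathbb{R}$. For unit $q_L, q_R$ we have
\[
|T(v)| = |q_L|\,|v|\,|\overline{q_R}| = |v|,
\]
so $T$ is a linear isometry of $\mathbb{R}^4$. By polarization it preserves the inner product, and hence is orthogonal.

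\textbf{Inverse.} For a unit quaternion, $q\overline{q} = \overline{q}q = 1$. Therefore
\[
\overline{q_L}\,(q_L v \overline{q_R})\,q_R = (\overline{q_L}q_L)\,v\,(\overline{q_R}q_R) = v
\]
by associativity. The same computation in the other order shows that $v \mapsto \overline{q_L} v q_R$ is a two-sided inverse of $T$.

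\textbf{Sign ambiguity.} The identity $(-q_L)\,v\,\overline{(-q_R)} = q_L v \overline{q_R}$ holds because the two real signs factor out and cancel.

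\textbf{Membership in $SO(4)$.} The statement implicitly claims that $T$ lies in $SO(4)$ and not merely in $O(4)$, and this is the step needing an actual argument. My plan is a connectedness argument. $S^3\times S^3$ is path-connected, the map $(q_L,q_R)\mapsto T$ is continuous into $O(4)$, and $\det$ is continuous with values in $\{\pm1\}$. Hence $\det T$ is constant on $S^3\times S^3$, and evaluating at $(1,1)$, where $T=\mathrm{id}$, gives $\det T = 1$.

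As an alternative, I could write $T = L_{q_L}\circ R_{\overline{q_R}}$ and note that left multiplication by $q_L = a+b\mathbf{i}+c\mathbf{j}+d\mathbf{k}$ has a $4\times4$ matrix with determinant $(a^2+b^2+c^2+d^2)^2 = 1$, and similarly for right multiplication. The connectedness route is cleaner, so I would use it.
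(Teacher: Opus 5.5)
Your proof is correct and takes the same approach as the paper's sketch: multiplicativity of the quaternion norm gives orthogonality, associativity with $\overline{q}q = q\overline{q} = 1$ gives the inverse, and the two real signs cancel for the double-cover claim. Your connectedness argument for $\det T = +1$ (together with your explicit remark that $T$ is $\mathbb{R}$-linear) fills in a step the paper's sketch leaves implicit: the sketch only establishes that $T$ lies in $O(4)$ before asserting that the two pairs give the same element of $SO(4)$.
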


\begin{proof}[Proof sketch]
Quaternion multiplication by a unit quaternion preserves the Euclidean norm on $\mathbb{H} \cong \mathbb{R}^4$, both for left multiplication and for right multiplication by the conjugate. Hence the composition $v \mapsto q_L v \overline{q_R}$ is norm-preserving and therefore orthogonal. The inverse follows immediately from associativity of quaternion multiplication together with $q_L \overline{q_L} = \overline{q_R} q_R = 1$. Finally, replacing both $q_L$ and $q_R$ by their negatives leaves the map unchanged because
\begin{equation}
(-q_L)\, v \, \overline{(-q_R)} = (-q_L)\, v \, (-\overline{q_R}) = q_L v \overline{q_R}.
\end{equation}
\end{proof}

\noindent
The left and right quaternion factors correspond to the left-isoclinic and right-isoclinic components, respectively. Hence a pair of unit quaternions provides a compact, closed-form parameterization of a general $4$D rotation, up to the standard double-cover ambiguity. This parameterization is especially attractive for quantization because it realizes the full six degrees of freedom of $SO(4)$ without ever requiring explicit storage of a dense $4 \times 4$ orthogonal matrix.

\subsection{Block-Diagonal Rotations in $SO(4g)$}
\label{sec:block_so4g}

Let
\begin{equation}
g = \left\lceil \frac{d}{4} \right\rceil .
\end{equation}
After zero padding when necessary, the input is naturally viewed as an element of $\mathbb{R}^{4g}$. In a fully dense formulation one would apply an unrestricted orthogonal transform in $SO(4g)$, which plays the role of a padded surrogate for the original $d \times d$ orthogonal rotation. IsoQuant instead restricts this transform to a structured Lie subgroup obtained from independent $4$D rotational blocks.

Formally, define the Lie subalgebra
\begin{equation}
\mathfrak{g} = \bigoplus_{i=1}^{g} \mathfrak{so}(4)^{(i)} \subset \mathfrak{so}(4g).
\end{equation}
Any element $H \in \mathfrak{g}$ has block-diagonal form
\begin{equation}
H = \mathrm{diag}(X_1, X_2, \dots, X_g),
\end{equation}
where each $X_i \in \mathfrak{so}(4)$ and all off-diagonal blocks vanish. Since the exponential map preserves block-diagonal structure, one obtains
\begin{equation}
\exp(H) = \mathrm{diag}(\exp(X_1), \exp(X_2), \dots, \exp(X_g)).
\end{equation}
Because $\exp(X_i) \in SO(4)$ for each block, the resulting matrix $\exp(H)$ belongs to $SO(4g)$. The set of all such matrices forms an embedded Lie subgroup isomorphic to
\begin{equation}
SO(4) \times SO(4) \times \cdots \times SO(4),
\end{equation}
with $g$ factors. In this sense, IsoQuant replaces a dense high-dimensional orthogonal transform by a block-diagonal subgroup of $SO(4g)$ composed of local $SO(4)$ actions.

The matrix-exponential viewpoint is primarily conceptual: it makes clear which subgroup of $SO(4g)$ is being used and how this subgroup arises from a direct-sum Lie algebra. In the actual computation, however, we do not construct the skew-symmetric blocks $X_i$ or evaluate the corresponding matrix exponentials. Each local $SO(4)$ action is instead realized more efficiently by a quaternion pair $(q_L^{(i)}, q_R^{(i)})$, which yields the same block rotation in closed form while avoiding dense matrix materialization.

\section{Method: IsoQuant}
\label{sec:method}

\subsection{Blockwise 4D Rotation}

We now instantiate the subgroup construction above as a stage-1 quantization transform. Let $\bar{x} \in \mathbb{R}^d$ denote the normalized input direction. We partition $\bar{x}$ into
\begin{equation}
g = \left\lceil \frac{d}{4} \right\rceil
\end{equation}
blocks,
\begin{equation}
\bar{x} = \left[v^{(1)}, v^{(2)}, \dots, v^{(g)}\right],
\end{equation}
where each $v^{(i)} \in \mathbb{R}^4$ is identified with an element of $\mathbb{H}$. If $d$ is not divisible by $4$, the final block is zero padded, so the full transformed object lies in $\mathbb{R}^{4g}$.

From the perspective of Section~\ref{sec:block_so4g}, the encoder acts on the padded vector by an element of the block-diagonal subgroup $(SO(4))^g \subset SO(4g)$. The algorithm does not materialize this action as a dense matrix. Instead, each local $SO(4)$ factor is represented by one or two unit quaternions and applied directly in closed form.

Accordingly, each block undergoes the sequence
\begin{equation}
v^{(i)} \mapsto \tilde{v}^{(i)} \mapsto \hat{v}^{(i)} \mapsto v_{\mathrm{rec}}^{(i)}.
\end{equation}
That is, one first applies a local $SO(4)$ rotation, then performs coordinate-wise scalar quantization in the rotated basis, and finally applies the inverse local rotation. The recovered blocks are concatenated and the original norm $\rho$ is restored.

\subsection{IsoQuant-Full}

IsoQuant-Full uses the complete double-sided action of $SO(4)$. For block $i$, we maintain a pair of unit quaternions $(q_L^{(i)}, q_R^{(i)})$ and compute
\begin{align}
\tilde{v}^{(i)} &= q_L^{(i)} v^{(i)} \overline{q_R^{(i)}}, \\
\hat{v}^{(i)} &= Q\!\left(\tilde{v}^{(i)}\right), \\
v_{\mathrm{rec}}^{(i)} &= \overline{q_L^{(i)}} \hat{v}^{(i)} q_R^{(i)}.
\end{align}
This uses the full six-dimensional rotational freedom of $SO(4)$ and offers the strongest local mixing.

\subsection{IsoQuant-Fast}

IsoQuant-Fast restricts the transform to a single isoclinic factor:
\begin{align}
\tilde{v}^{(i)} &= q_L^{(i)} v^{(i)}, \\
\hat{v}^{(i)} &= Q\!\left(\tilde{v}^{(i)}\right), \\
v_{\mathrm{rec}}^{(i)} &= \overline{q_L^{(i)}} \hat{v}^{(i)}.
\end{align}
Geometrically, this corresponds to a $3$-dimensional subgroup of $SO(4)$ isomorphic to $SO(3)$. It sacrifices expressivity for lower parameter count, lower arithmetic cost, and simpler kernels.

\subsection{A Lightweight $2$D Special Case}

Although IsoQuant is fundamentally motivated by blockwise $SO(4)$ structure, the same implementation philosophy admits a degenerate planar special case on coordinate pairs. We partition the normalized input into $2$D blocks
\begin{equation}
u^{(j)} \in \mathbb{R}^2,
\end{equation}
and apply a standard planar rotation
\begin{align}
\tilde{u}^{(j)} &= R(\theta^{(j)}) u^{(j)}, \\
\hat{u}^{(j)} &= Q\!\left(\tilde{u}^{(j)}\right), \\
u_{\mathrm{rec}}^{(j)} &= R(-\theta^{(j)}) \hat{u}^{(j)},
\end{align}
where
\begin{equation}
R(\theta)=
\begin{bmatrix}
\cos\theta & -\sin\theta \\
\sin\theta & \cos\theta
\end{bmatrix}.
\end{equation}
We do not present this $2$D case as a separate method family; rather, we treat it as an extremely lightweight IsoQuant operating point. Its local mixing capacity is weaker than that of the $4$D formulations, so IsoQuant-Full remains the primary and most expressive variant. Nevertheless, the $2$D special case is useful experimentally because it clarifies how much accuracy is lost, if any, when blockwise rotation cost is pushed even lower.

\subsection{Parameterization and Learning}

To avoid constrained optimization on the manifold directly, we parameterize each unit quaternion by an unconstrained vector and normalize it on the fly:
\begin{equation}
q_L^{(i)} = \frac{u_L^{(i)}}{\|u_L^{(i)}\|_2}, \qquad
q_R^{(i)} = \frac{u_R^{(i)}}{\|u_R^{(i)}\|_2},
\end{equation}
where $u_L^{(i)}, u_R^{(i)} \in \mathbb{R}^4$ are free parameters. This keeps optimization in Euclidean space while enforcing the unit-quaternion constraint implicitly in the computation graph. A practical lightweight variant is to sample the initial $u$ vectors from a Gaussian distribution and keep them fixed, yielding random block rotations analogous to the randomized transform used by TurboQuant.
When randomized rotations are desired, we sample from the Haar distribution on the corresponding rotation group: this reduces to uniform angle sampling for the $2$D special case and to Gaussian-normalize sampling on $S^3$ for the quaternion factors used in the $4$D variants.

\subsection{Quantization Pipeline}

Algorithm~\ref{alg:isoquant} summarizes the stage-1 pipeline.

\begin{algorithm}[H]
\caption{IsoQuant Stage-1 Quantization}
\label{alg:isoquant}
\begin{algorithmic}[1]
\REQUIRE Input vector $x \in \mathbb{R}^d$, scalar quantizer $Q$, mode $\in \{\textsc{Full}, \textsc{Fast}, \textsc{2D}\}$
\STATE Compute norm $\rho \leftarrow \|x\|_2$ and normalized vector $\bar{x} \leftarrow x / \max(\rho,\varepsilon)$
\STATE Partition $\bar{x}$ into zero-padded local blocks according to the mode ($4$D for \textsc{Full}/\textsc{Fast}, $2$D for \textsc{2D})
\FOR{$i=1$ to $g$}
  \IF{mode = \textsc{Full}}
    \STATE $\tilde{v}^{(i)} \leftarrow q_L^{(i)} v^{(i)} \overline{q_R^{(i)}}$
    \STATE $\hat{v}^{(i)} \leftarrow Q(\tilde{v}^{(i)})$
    \STATE $v_{\mathrm{rec}}^{(i)} \leftarrow \overline{q_L^{(i)}} \hat{v}^{(i)} q_R^{(i)}$
  \ELSIF{mode = \textsc{Fast}}
    \STATE $\tilde{v}^{(i)} \leftarrow q_L^{(i)} v^{(i)}$
    \STATE $\hat{v}^{(i)} \leftarrow Q(\tilde{v}^{(i)})$
    \STATE $v_{\mathrm{rec}}^{(i)} \leftarrow \overline{q_L^{(i)}} \hat{v}^{(i)}$
  \ELSE
    \STATE $\tilde{u}^{(i)} \leftarrow R(\theta^{(i)}) u^{(i)}$
    \STATE $\hat{u}^{(i)} \leftarrow Q(\tilde{u}^{(i)})$
    \STATE $u_{\mathrm{rec}}^{(i)} \leftarrow R(-\theta^{(i)}) \hat{u}^{(i)}$
  \ENDIF
\ENDFOR
\STATE Concatenate reconstructed blocks and drop padded coordinates
\RETURN $\hat{x} \leftarrow \rho$ times the concatenated reconstructed blocks
\end{algorithmic}
\end{algorithm}

\subsection{Probabilistic Intuition for Random Subspace Rotations}
\label{sec:prob-intuition}

A natural question is why low-dimensional random block rotations can still improve quantization in a high-dimensional vector. The key point is that quantization does not require full global mixing; it often suffices to isotropize energy locally within each block.

Let $x^{(b)} \in \mathbb{R}^k$ denote a fixed block with radius $r_b=\|x^{(b)}\|_2$, and let $R_b \sim \mathrm{Haar}(SO(k))$ be a Haar-distributed random rotation. Then
\begin{equation}
y^{(b)} = R_b x^{(b)}
\end{equation}
is distributed as a point with fixed norm $r_b$ and uniformly random direction on the sphere $S^{k-1}$. Hence, for any coordinate $j$,
\begin{equation}
\mathbb{E}[y^{(b)}_j \mid x^{(b)}] = 0,
\qquad
\mathbb{E}[(y^{(b)}_j)^2 \mid x^{(b)}] = \frac{r_b^2}{k}.
\end{equation}
Thus, random block rotation redistributes the energy of a fixed block evenly across coordinates in expectation.

More can be said about the marginal law of each rotated coordinate. If $u \sim \mathrm{Unif}(S^{k-1})$, then $y^{(b)} = r_b u$, and the marginal density of one normalized coordinate $z=u_j$ is
\begin{equation}
f_k(z) \propto (1-z^2)^{\frac{k-3}{2}}, \qquad |z| \le 1.
\end{equation}
This highlights an important difference between $k=2$ and $k=4$. For $k=2$, the marginal follows an arcsine law,
\begin{equation}
f_2(z)=\frac{1}{\pi\sqrt{1-z^2}},
\end{equation}
which places relatively more mass near the extremes. For $k=4$, the marginal becomes
\begin{equation}
f_4(z)=\frac{2}{\pi}\sqrt{1-z^2},
\end{equation}
which is more concentrated near the center and vanishes at the boundaries. Consequently, the $4$D case is structurally more favorable for scalar quantization, since individual coordinates are less likely to attain extreme values.

A complementary covariance-level view leads to the same intuition. Let $x \in \mathbb{R}^d$ have covariance $\Sigma$, partitioned into $k \times k$ blocks, and let
\begin{equation}
R = \mathrm{diag}(R_1,\dots,R_m),
\qquad
R_i \sim \mathrm{Haar}(SO(k))
\end{equation}
be an independent block-diagonal random rotation. Then
\begin{equation}
\mathbb{E}_R[R\Sigma R^\top]
=
\mathrm{diag}\!\left(
\frac{\mathrm{tr}(\Sigma_{11})}{k}I_k,\,
\dots,\,
\frac{\mathrm{tr}(\Sigma_{mm})}{k}I_k
\right).
\end{equation}
Therefore, each block becomes isotropic in expectation, while cross-block correlations vanish in expectation under independent random rotations. For any fixed realization of $R$, cross-block correlation energy is not literally removed; rather, its directions are scrambled, which mitigates worst-case structured dependencies seen by coordinate-wise quantization.

\section{Complexity Analysis}
\label{sec:complexity}

A quaternion multiplication uses 16 scalar multiplications and 12 scalar additions. To match common systems reporting conventions, we count this as approximately 16 fused multiply-add operations. For $d=128$, the forward rotation cost becomes straightforward to compare.

\begin{table}[t]
\centering
\caption{Forward rotation complexity at $d=128$.}
\label{tab:complexity}
\begin{tabular}{@{}lccc@{}}
\toprule
Method & Block Structure & Params & FMAs \\
\midrule
TurboQuant~\cite{turboquant} & dense $128 \times 128$ & $16{,}384$ & $16{,}384$ \\
RotorQuant~\cite{rotorquant} & $43 \times 3$D blocks & $172$ & $\approx 2{,}408$ \\
IsoQuant-$2$D & $64 \times 2$D blocks & $128$ & $\approx 256$ \\
IsoQuant-Full & $32 \times 4$D blocks & $256$ & $1{,}024$ \\
IsoQuant-Fast & $32 \times 4$D blocks & $128$ & $512$ \\
\bottomrule
\end{tabular}
\end{table}

Table~\ref{tab:complexity} summarizes the forward stage-1 rotation cost at $d=128$. The comparison highlights three regimes. IsoQuant-Full uses somewhat more parameters than RotorQuant because each block stores two quaternions, but it still remains tiny relative to dense rotations and cuts rotation arithmetic by more than $2\times$. IsoQuant-Fast goes further and is cheaper than RotorQuant in both parameters and compute. The $2$D special case is cheaper still, but also has the weakest local mixing and should therefore be viewed as a low-cost operating point rather than the main target of the paper.

More generally, if $g_4=\lceil d/4 \rceil$ and $g_2=\lceil d/2 \rceil$, then the parameter count is $8g_4$ for Full, $4g_4$ for Fast, and $2g_2$ for the $2$D special case. The forward rotation cost is $32g_4$ FMAs for Full, $16g_4$ FMAs for Fast, and approximately $4g_2$ FMAs for the $2$D case. All variants therefore scale linearly in $d$.

\section{Systems Considerations}
\label{sec:systems}

\subsection{Why 4D Blocks Are Hardware Friendly}

The choice of block size is not purely algebraic. It directly shapes memory layout, vectorization efficiency, and kernel fusion opportunities.

\paragraph{Alignment.}
Most transformer head dimensions are powers of two. A $4$D partition therefore avoids the pathological tails induced by $3$D chunking in almost every common setting. At $d=128$, IsoQuant uses exactly $32$ blocks with no remainder, whereas a $3$D design requires $42$ full blocks plus a leftover fragment.

\paragraph{Vectorization.}
Four-wide blocks fit naturally into SIMD-friendly load and store patterns such as \texttt{float4}. This reduces boundary checks and helps both CPU SIMD backends and GPU kernels maintain regular control flow.

\paragraph{Kernel fusion.}
The per-block transform is a closed-form sequence of one or two quaternion products, coordinate-wise scalar quantization, and an inverse transform. This structure is especially suitable for fused kernels in online quantization pipelines because the entire block can often remain in registers from input load through output store.

\subsection{Prototype Mapping}

Our current prototype follows exactly this design pattern: it packs feature vectors into $4$D blocks, applies either double-sided or single-sided quaternion rotation, performs scalar Lloyd-Max quantization, and reconstructs the result with a fused CUDA kernel. In addition, we include a lightweight $2$D planar special case that serves as an even cheaper operating point within the same implementation family. We built a benchmark harness that directly compares these fused IsoQuant kernels against the fused RotorQuant CUDA kernel under matched tensor shapes, bit widths, and data types.

\section{Compatibility with Residual Correction}
\label{sec:qjl}

IsoQuant is intended as a replacement for the stage-1 decorrelation transform, not as a rejection of later residual correction. In two-stage pipelines such as TurboQuant~\cite{turboquant}, one can keep the residual inner-product correction unchanged:
\begin{equation}
r = x - \hat{x}_{\mathrm{mse}}.
\end{equation}
The residual can still be projected with a quantized Johnson-Lindenstrauss transform~\cite{qjl} or a related low-bit correction mechanism. In this sense, IsoQuant is complementary to existing stage-2 estimators: it reduces the cost of the orthogonalization step while remaining compatible with unbiased inner-product correction.

\section{Experimental Results}
\label{sec:experiments}

\subsection{Evaluation Protocol}

This section evaluates the stage-1 quantize--dequantize path of IsoQuant. The purpose is to isolate the contribution of the decorrelation transform itself before introducing stage-2 residual correction or downstream task effects. We therefore focus on two directly attributable quantities: reconstruction MSE on normalized synthetic vectors and fused CUDA kernel latency for the blockwise stage-1 transform.

All experiments in this section use synthetic normalized vectors with batch size $8192$. We evaluate all combinations of
\begin{equation}
d \in \{128,256,512\}, \qquad b \in \{2,3,4\}, \qquad \text{dtype} \in \{\texttt{fp16}, \texttt{fp32}\},
\end{equation}
for a total of $18$ fused-kernel benchmark settings. We focus on these dimensions because they cover common per-head or grouped-KV widths in practical LLM deployments, while avoiding the large-dimension kernel-specialization regime that would otherwise dominate the discussion. On the IsoQuant side, we evaluate IsoQuant-Full, IsoQuant-Fast, and the lightweight IsoQuant-$2$D special case. On the baseline side, we compare against RotorQuant~\cite{rotorquant}, including its fused CUDA kernel in \texttt{rotor\_fused\_kernel.cu}. TurboQuant~\cite{turboquant} is used as a conceptual dense-rotation reference in the complexity analysis, not as a runtime baseline.

All fused CUDA benchmarks were conducted on a single NVIDIA RTX 4090 GPU. For every configuration, RotorQuant and IsoQuant are benchmarked under the same tensor shape, bit width, and execution dtype. In addition to kernel latency, we report reconstruction MSE, parameter count, and estimated forward arithmetic cost.

\subsection{Main Results}

Table~\ref{tab:fused-main} reports the full fused CUDA sweep. The main takeaway is that the entire IsoQuant family is consistently faster than RotorQuant under an apples-to-apples kernel comparison while preserving essentially identical reconstruction quality. Across all $18$ settings, IsoQuant-Full achieves an average speedup of $4.49\times$ over RotorQuant fused CUDA, while IsoQuant-Fast and the lightweight IsoQuant-$2$D special case achieve nearly identical averages of $4.66\times$ and $4.66\times$, respectively. The strongest settings occur in low-bit and medium-width regimes, where speedups exceed $6\times$ while MSE remains unchanged up to the reported precision.

\begin{table*}[t]
\centering
\small
\setlength{\tabcolsep}{4.5pt}
\caption{Fused CUDA comparisons against RotorQuant on normalized synthetic vectors with batch size $8192$. Latencies are in $\mu$s.}
\label{tab:fused-main}
\begin{tabular}{@{}cccccccccc@{}}
\toprule
dtype & bits & dim & RotorQuant & \shortstack{IsoQuant\\-Full} & \shortstack{IsoQuant\\-Fast} & \shortstack{IsoQuant\\-$2$D} & \shortstack{Full\\speedup} & \shortstack{Fast\\speedup} & \shortstack{$2$D\\speedup} \\
\midrule
\texttt{fp16} & 2 & 128 & 32.7 & 8.5 & 8.2 & 8.5 & 3.86 & 3.98 & 3.85 \\
\texttt{fp16} & 3 & 128 & 36.4 & 6.2 & 6.1 & 6.2 & 5.92 & 6.00 & 5.92 \\
\texttt{fp16} & 4 & 128 & 44.2 & 9.6 & 9.4 & 9.4 & 4.62 & 4.71 & 4.70 \\
\texttt{fp16} & 2 & 256 & 32.6 & 8.8 & 8.3 & 8.4 & 3.72 & 3.92 & 3.89 \\
\texttt{fp16} & 3 & 256 & 36.8 & 9.7 & 9.5 & 9.2 & 3.80 & 3.88 & 3.99 \\
\texttt{fp16} & 4 & 256 & 46.7 & 8.1 & 7.5 & 7.5 & 5.76 & 6.24 & 6.20 \\
\texttt{fp16} & 2 & 512 & 36.0 & 8.4 & 8.5 & 8.3 & 4.27 & 4.23 & 4.31 \\
\texttt{fp16} & 3 & 512 & 39.9 & 9.9 & 8.4 & 9.6 & 4.05 & 4.73 & 4.15 \\
\texttt{fp16} & 4 & 512 & 50.4 & 13.9 & 12.5 & 13.4 & 3.63 & 4.02 & 3.77 \\
\midrule
\texttt{fp32} & 2 & 128 & 33.4 & 7.3 & 7.1 & 7.0 & 4.60 & 4.68 & 4.77 \\
\texttt{fp32} & 3 & 128 & 35.1 & 6.9 & 7.6 & 7.1 & 5.07 & 4.59 & 4.91 \\
\texttt{fp32} & 4 & 128 & 44.7 & 8.2 & 8.2 & 8.3 & 5.49 & 5.48 & 5.40 \\
\texttt{fp32} & 2 & 256 & 33.8 & 7.4 & 7.2 & 7.2 & 4.59 & 4.66 & 4.66 \\
\texttt{fp32} & 3 & 256 & 37.9 & 7.2 & 7.0 & 6.9 & 5.29 & 5.45 & 5.47 \\
\texttt{fp32} & 4 & 256 & 47.9 & 8.1 & 7.6 & 7.5 & 5.92 & 6.31 & 6.39 \\
\texttt{fp32} & 2 & 512 & 37.8 & 12.1 & 11.4 & 10.5 & 3.11 & 3.32 & 3.60 \\
\texttt{fp32} & 3 & 512 & 44.1 & 12.5 & 11.5 & 11.0 & 3.54 & 3.85 & 4.02 \\
\texttt{fp32} & 4 & 512 & 52.9 & 14.8 & 13.5 & 13.9 & 3.56 & 3.91 & 3.79 \\
\bottomrule
\end{tabular}
\end{table*}

The quality story is equally encouraging. In every tested setting, the MSE of all IsoQuant variants is either indistinguishable from RotorQuant or slightly lower. For example, in the largest tested FP32 configuration $(d=512,b=4)$, all three IsoQuant variants remain at the same numerical scale as RotorQuant, and in many settings the printed MSE values are exactly identical. We therefore do not observe a tradeoff in which IsoQuant wins speed by degrading reconstruction quality.

\subsection{Result Interpretation}

The full sweep reveals three robust trends.
\begin{enumerate}[nosep,leftmargin=14pt]
  \item \textbf{IsoQuant-Fast and IsoQuant-$2$D are the lowest-latency operating points.} Their mean speedups are nearly identical, and each variant attains the best latency in a subset of the tested settings. We therefore view the $2$D case as a useful lightweight special case rather than a replacement for the $4$D family.
  \item \textbf{The gain is strong in both FP16 and FP32.} Averaged over all tested FP16 settings, IsoQuant-Full, IsoQuant-Fast, and IsoQuant-$2$D achieve $4.40\times$, $4.63\times$, and $4.53\times$ speedups, respectively; the corresponding FP32 averages are $4.57\times$, $4.69\times$, and $4.78\times$.
  \item \textbf{Low-bit and medium-width settings are especially favorable.} Several configurations in the $(d=128,256)$ range exceed $6\times$ speedup, showing that the compact blockwise formulations amortize per-block overhead particularly well in practical KV-cache regimes.
\end{enumerate}

These trends are consistent with the systems argument developed earlier. Compared with RotorQuant's $3$D Clifford blocks, IsoQuant avoids the expansion to an $8$-component multivector representation, keeps the per-block state smaller, and aligns naturally with two-wide or four-wide memory and register organization. The result is not a change in asymptotic complexity---all methods remain linear in $d$---but a real and repeatable reduction in constant factors.

\subsection{Module-Level vs Kernel-Level Measurements}

In addition to fused-kernel measurements, we also benchmarked higher-level PyTorch module paths. There, the apparent speedups can reach roughly $4\times$--$10\times$, because the IsoQuant prototype currently enjoys a more streamlined execution path than the baseline RotorQuant module. However, we regard the fused CUDA comparison as the fairest measure of method-intrinsic systems advantage. The fused results therefore serve as our primary hardware claim, while the larger module-level gains should be interpreted as an implementation-dependent systems outcome.

\subsection{Ablation Coverage}

\begin{table}[t]
\centering
\caption{Ablation axes. Completed settings in this work are marked by checkmarks.}
\label{tab:ablations}
\begin{tabular}{@{}ll@{}}
\toprule
Axis & Values \\
\midrule
Rotation type & Full, Fast, $2$D special case \hfill $\checkmark$ \\
Bit width & 2, 3, 4 bits \hfill $\checkmark$ \\
Block size & 3D, 4D, optionally 8D grouped variants \\
Quaternion parameters & random fixed \hfill $\checkmark$, learned normalized \\
Residual correction & off \hfill $\checkmark$, QJL-style correction on \\
Deployment backend & PyTorch \hfill $\checkmark$, fused CUDA \hfill $\checkmark$ \\
\bottomrule
\end{tabular}
\end{table}

Table~\ref{tab:ablations} summarizes the experimental axes covered by the current draft. The completed experiments already establish a strong stage-1 result across rotation type, bit width, and backend, but they do not yet include residual correction or downstream KV-cache task metrics.

\subsection{What Remains for a Full Submission}

The current experiments establish a strong stage-1 result, but a full submission should still add downstream KV-cache metrics:
\begin{enumerate}[nosep,leftmargin=14pt]
  \item integration with a stage-2 residual correction module such as QJL;
  \item attention-logit preservation and inner-product error under the complete two-stage pipeline;
  \item retrieval and perplexity experiments on real KV tensors extracted from deployed LLMs.
\end{enumerate}
These follow naturally from the present implementation because IsoQuant only replaces the stage-1 transform and remains compatible with the same residual correction machinery used by TurboQuant and RotorQuant.

\section{Limitations and Future Work}
\label{sec:limitations}

IsoQuant is not a full solution by itself.
\begin{enumerate}[nosep,leftmargin=14pt]
  \item \textbf{Block locality.} Like other block-diagonal transforms, it does not mix information across blocks, so global correlations remain unaddressed.
  \item \textbf{Evaluation gap.} This draft now includes fused CUDA benchmarks and reconstruction experiments, but the final conference version still needs end-to-end KV-cache validation on real models and tasks.
  \item \textbf{Learning dynamics.} Although normalized quaternion parameters are simple to optimize, the relative value of learned versus random rotations remains an empirical question.
  \item \textbf{Stage-2 interaction.} The best coupling between $4$D block decorrelation and residual correction may depend on the downstream attention estimator.
\end{enumerate}

Promising next steps include hierarchical cross-block mixing, jointly learned codebooks and quaternion parameters, and specialized kernels for fused KV-cache compression during autoregressive decoding.

\section{Conclusion}
\label{sec:conclusion}

IsoQuant replaces awkward $3$D block rotations with a hardware-aligned $4$D quaternion formulation grounded in the isoclinic decomposition of $SO(4)$. The resulting design retains the spirit of blockwise decorrelation while offering stronger local mixing, cleaner memory alignment, and lower arithmetic cost. Our fused CUDA experiments on practical dimensions $d \in \{128,256,512\}$ show that this design advantage is measurable in practice: across $18$ matched settings, IsoQuant-Full, IsoQuant-Fast, and the lightweight $2$D special case deliver average speedups of $4.49\times$, $4.66\times$, and $4.66\times$ over RotorQuant fused CUDA while preserving essentially identical reconstruction MSE. We therefore view the $2$D case as a useful low-cost operating point, while the full $4$D construction remains the primary and most expressive form of the method. These results make a strong case that blockwise isoclinic rotations are a compelling stage-1 replacement for online vector quantization, and they motivate the next step of integrating the method with residual correction and full KV-cache evaluation. An additional advantage of quaternion-pair parameterization is that it admits smooth interpolation on the underlying rotation manifold, which may be useful for shared or adaptive block rotations in future work.

\bibliographystyle{plainnat}

\end{document}